\definecolor{mydarkblue}{rgb}{0,0.08,0.45}
\title{Optimally-Weighted Herding is Bayesian Quadrature}
\author{
 {\bf Ferenc Husz\'{a}r} \\
Department of Engineering\\
Cambridge University\\ 
\texttt{fh277@cam.ac.uk}
\And 
{\bf David Duvenaud } \\ 
Department of Engineering\\ 
Cambridge University \\
\texttt{dkd23@cam.ac.uk}
}
\begin{document} 
 
\maketitle 
 
\begin{abstract} 
Herding and kernel herding are deterministic methods of choosing samples which summarise a probability distribution.  A related task is choosing samples for estimating integrals using Bayesian quadrature.  We show that the criterion minimised when selecting samples in kernel herding is equivalent to the posterior variance in Bayesian quadrature.  We then show that sequential Bayesian quadrature can be viewed as a weighted version of kernel herding which achieves performance superior to any other weighted herding method. We demonstrate empirically a rate of convergence faster than $\mathcal{O}(1/N)$.  Our results also imply an upper bound on the empirical error of the Bayesian quadrature estimate.
\end{abstract} 

\begin{figure}[h]
\centering
\includegraphics[width=\columnwidth]{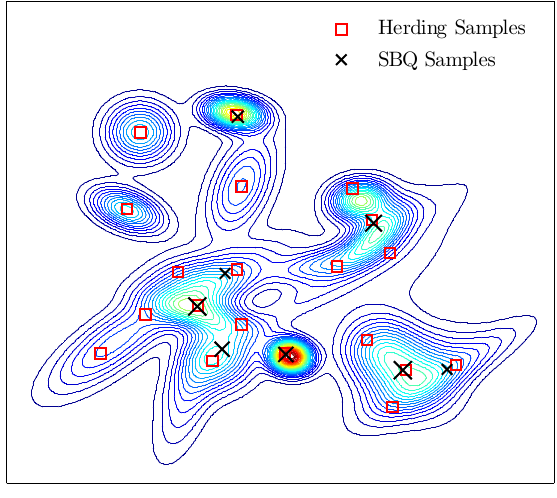}
\caption{The first 8 samples from sequential Bayesian quadrature, versus the first 20 samples from herding.  Only 8 weighted \sbq{} samples are needed to give an estimator with the same maximum mean discrepancy as using 20 herding samples with uniform weights.  Relative sizes of samples indicate their relative weights.}
\label{fig:fig1}
\end{figure}

\section{INTRODUCTION}
\paragraph{The problem: Integrals} A common problem in statistical machine learning is to compute expectations of functions over probability distributions of the form:
\begin{equation}
	Z_{f,p} = \int f(x) p(x) dx \label{eqn:integral}
\end{equation}
Examples include computing marginal distributions, making predictions marginalizing over parameters, or computing the Bayes risk in a decision problem. In this paper we assume that the distribution $p(x)$ is known in analytic form, and $f(x)$ can be evaluated at arbitrary locations.

Monte Carlo methods produce random samples from the distribution $p$ and then approximate the integral by taking the empirical mean $\hat{Z} = \frac{1}{N}\sum_{n=1}^{N}f_{x_n}$ of the function evaluated at those points. This non-deterministic estimate converges at a rate $\mathcal{O}(\frac{1}{\sqrt{N}})$. When exact sampling from $p$ is impossible or impractical, Markov chain Monte Carlo (MCMC) methods are often used. MCMC methods can be applied to almost any problem but convergence of the estimate depends on several factors and is hard to estimate \citep{CowlesCarlin96}. The focus of this paper is on quasi-Monte Carlo methods that -- instead of sampling randomly -- produce a set of pseudo-samples in a deterministic fashion. These methods operate by directly minimising some sort of discrepancy between the empirical distribution of pseudo-samples and the target distribution. Whenever these methods are applicable, they achieve convergence rates superior to the $\mathcal{O}(\frac{1}{\sqrt{N}})$ rate typical of random sampling.

In this paper we highlight and explore the connections between two deterministic sampling and integration methods: Bayesian quadrature (\bq{}) \citep{BZHermiteQuadrature,BZMonteCarlo} (also known as Bayesian Monte Carlo) and kernel herding \citep{chen2010super}. Bayesian quadrature estimates integral \eqref{eqn:integral} by inferring a posterior distribution over $f$ conditioned on the observed evaluations $f_{x_n}$, and then computing the posterior expectation of $Z_{f,p}$. The points where the function should be evaluated can be found via Bayesian experimental design, providing a deterministic procedure for selecting sample locations.

Herding, proposed recently by \cite{chen2010super}, produces pseudosamples by minimising the discrepancy of moments between the sample set and the target distribution. Similarly to traditional Monte Carlo, an estimate is formed by taking the empirical mean over samples $\hat{Z} = \frac{1}{N}\sum_{n=1}^{N}f_{x_n}$. Under certain assumptions, herding has provably fast, $\mathcal{O}(\frac{1}{N})$ convergence rates in the parametric case, and has demonstrated strong empirical performance in a variety of tasks.

\paragraph{Summary of contributions} In this paper, we make two main contributions.  First, we show that the Maximum Mean Discrepancy (MMD) criterion used to choose samples in kernel herding is identical to the expected error in the estimate of the integral $Z_{f,p}$ under a Gaussian process prior for $f$.  This expected error is the criterion being minimized when choosing samples for Bayesian quadrature.  Because Bayesian quadrature assigns different weights to each of the observed function values $f(\vx)$, we can view Bayesian quadrature as a weighted version of kernel herding.  We show that these weights are optimal in a minimax sense over all functions in the Hilbert space defined by our kernel.  This implies that Bayesian quadrature dominates uniformly-weighted kernel herding and other non-optimally weighted herding in rate of convergence.

Second, we show that minimising the MMD, when using \bq{} weights is closely related to the sparse dictionary selection problem studied in \citep{KrauseCevher10}, and therefore is approximately submodular with respect to the samples chosen. This allows us to reason about the performance of greedy forward selection algorithms for Bayesian Quadrature. We call this greedy method Sequential Bayesian Quadrature (\sbq{}).

We then demonstrate empirically the relative performance of herding, i.i.d random sampling, and \sbq{}, and demonstrate that \sbq{} attains a rate of convergence faster than $\mathcal{O}(1/N)$.

\section{HERDING} 

Herding was introduced by \cite{welling2009herding} as a method for generating pseudo-samples from a distribution in such a way that certain nonlinear moments of the sample set closely match those of the target distribution.  The empirical mean $\frac{1}{N}\sum_{n=1}^{N}f_{x_n}$ over these pseudosamples is then used to estimate integral \eqref{eqn:integral}.

\subsection{Maximum Mean Discrepancy}

For selecting pseudosamples, herding relies on an objective based on the maximum mean discrepancy \citep[MMD;\ ][]{Sriperumbudur2010}. MMD measures the divergence between two distributions, $p$ and $q$ with respect to a class of integrand functions $\mathcal{F}$ as follows:
\begin{align}
	\mmd_{\mathcal{F}}\left(p,q\right) = \sup_{f\in\mathcal{F}}\left\vert\int f_x p(x) dx - \int f_x q(x) dx \right\vert
\end{align}

Intuitively, if two distributions are close in the MMD sense, then no matter which function $f$ we choose from $\mathcal{F}$, the difference in its integral over $p$ or $q$ should be small. A particularly interesting case is when the function class $\mathcal{F}$ is functions of unit norm from a reproducing kernel Hilbert space (RKHS) $\He$. In this case, the MMD between two distributions can be conveniently expressed using expectations of the associated kernel $k(x, x')$ only \citep{Sriperumbudur2010}:
\begin{align}
MMD^2_{\He}(p,q) =& \sup_{\substack{f\in\He\\\Hnorm{f}=1}}\left\vert\int f_x p(x) dx - \int f_x q(x) dx\right\vert^2\label{eqn:rkhs-mmd}\\
	=& \Hnorm{\mu_{p} - \mu_{q}}^2\\
\nonumber	=&\iint k(x,y) p(x) p(y) dx dy\\
\nonumber	-2 &\iint k(x,y) p(x) q(y) dx dy\\
	+ &\iint k(x,y) q(x) q(y) dx dy,
\end{align}
where in the above formula $\mu_{p}=\int \phi(\vx)p(\vx)d\vx\in\He$ denotes the \emph{mean element} associated with the distribution $p$. For characteristic kernels, such as the Gaussian kernel, the mapping between a distribution and its mean element is bijective. As a consequence $\mmd_{\He}(p,q)=0$ if and only if $p=q$, making it a powerful measure of divergence.

Herding uses maximum mean discrepancy to evaluate of how well the sample set $\{\vx_1,\ldots,\vx_{N}\}$ represents the target distribution $p$:

\begin{align}
	\epsilon^2_{herding}&\left(\{\vx_1,\ldots,\vx_{N}\}\right) = \mmd^2_{\He}\left(p,\frac{1}{N}\sum_{n=1}^{N}\delta_{x_n}\right)\\
\nonumber	=&\iint k(x,y) p(x) p(y) dx dy\\
		- &\frac{1}{N}\sum_{n=1}^{N}\int k(x,x_n) p(x) dx
		+ \frac{1}{N^2}\sum_{n,m=1}^{N} k(x_n,x_m)
\label{eq:mmd_assumption}
\end{align}
The herding procedure greedily minimizes its objective $\epsilon_{herding}\left(\{\vx_1,\ldots,\vx_{N}\}\right)$ , adding pseudosamples $\vx_n$ one at a time. When selecting the $n+1$-st pseudosample:
\begin{align}
\vx_{n+1} &\leftarrow \argmin_{\vx \in \mathcal{X}} \label{eqn:herding_criterion} \epsilon_{herding}\left(\{\vx_1,\ldots,\vx_{n},\vx\}\right)\\
	&= \argmax_{\vx \in \mathcal{X}} 2 \expectargs{\vx' \sim p}{k(\vx, \vx')} - \frac{1}{n+1}\sum_{m=1}^{n} k(\vx,\vx_m)\mbox{,}\notag
\end{align}
assuming $k(\vx,\vx) = \mbox{const}$.
The formula \eqref{eqn:herding_criterion} admits an intuitive interpretation: the first term encourages sampling in areas with high mass under the target distribution $p(\vx)$.  The second term discourages sampling at points close to existing samples. 

Evaluating \eqref{eqn:herding_criterion} requires us to compute $\expectargs{\vx' \sim p}{k(\vx, \vx')} $, that is to integrate the kernel against the target distribution. Throughout the paper we will assume that these integrals can be computed in closed form. Whilst the integration can indeed be carried out analytically in several cases \citep{Song2008,chen2010super}, this requirement is the most pertinent limitation on applications of kernel herding, Bayesian quadrature and related algorithms.

\subsection{Complexity and Convergence Rates}

Criterion \eqref{eqn:herding_criterion} can be evaluated in only $\mathcal{O}(n)$ time. Adding these up for all subsequent samples, and assuming that optimisation in each step has $\mathcal{O}(1)$ complexity, producing $N$ pseudosamples via kernel herding costs $\mathcal{O}(N^2)$ operations in total.

In finite dimensional Hilbert spaces, the herding algorithm has been shown to reduce $\mmd$ at a rate $\mathcal{O}(\frac{1}{N})$, which compares favourably with the $\mathcal{O}(\frac{1}{\sqrt{N}})$ rate obtained by non-deterministic Monte Carlo samplers. However, as pointed out by \cite{bach2012equivalence}, this fast convergence is not guaranteed in infinite dimensional Hilbert spaces, such as the RKHS corresponding to the Gaussian kernel.

\section{BAYESIAN QUADRATURE} 

\begin{figure}
\centering
\includegraphics[width=\columnwidth]{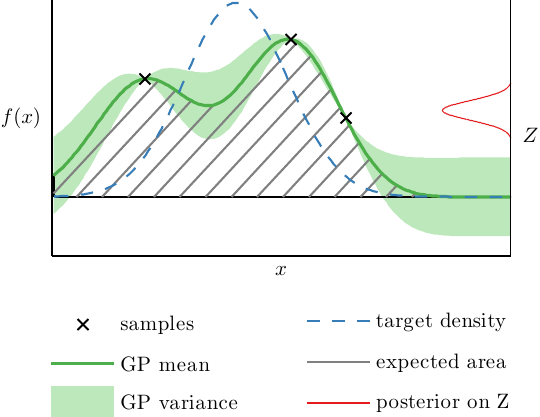}
\caption{An illustration of Bayesian Quadrature.  The function $f(x)$ is sampled at a set of input locations.  This induces a Gaussian process posterior distribution on $f$, which is integrated in closed form against the target density, $p(\vx)$.  Since the amount of volume under $f$ is uncertain, this gives rise to a (Gaussian) posterior distribution over $Z_{f,p}$.}
\label{fig:bq_intro}
\end{figure}

So far, we have only considered integration methods in which the integral \eqref{eqn:integral} is approximated by the empirical mean of the function evaluated at some set of samples, or pseudo-samples.  Equivalently, we can say that Monte Carlo and herding both assign an equal $\frac{1}{N}$ weight to each of the samples.

In \citep{BZMonteCarlo}, an alternate method is proposed: Bayesian Monte Carlo, or Bayesian quadrature (\bq).  \bq{} puts a prior distribution on $\vf$, then estimates integral \eqref{eqn:integral} by inferring a posterior distribution over the function $\vf$, conditioned on the observations $\vf(\vx_n)$ at some query points $\vx_n$.  The posterior distribution over $f$ then implies a distribution over $Z_{f,p}$.  This method allows us to choose sample locations $\vx_n$ in any desired manner. See Figure \ref{fig:bq_intro} for an illustration of Bayesian Quadrature.


\subsection{ BQ Estimator}

Here we derive the \bq{} estimate of \eqref{eqn:integral}, after conditioning on function evaluations $\vf(\vx_1) \dots \vf(\vx_N)$, denoted as $f(\vX)$.  The Bayesian solution implies a distribution over $Z_{f,p}$.  The mean of this distribution, $\expectargs{}{Z}$ is the optimal Bayesian estimator for a squared loss.

For simplicity, $\vf$ is assigned a Gaussian process prior with kernel function $k$ and mean $0$.  This assumption is very similar to the one made by kernel herding in Eqn.\ \eqref{eq:mmd_assumption}.

After conditioning on $\vf_{\vx}$, we obtain a closed-form posterior over $\vf$:
\begin{align}
p(\vf(\vx\st)|\vf(\vX)) = \N{\vf_{\vx\st}}{\mf(\vx\st)}{\cov(\vx\st,\vx\st')}
\end{align} 
where
\begin{align}
\mf(\vx\st) = & k(\vx\st, \vX) K^{-1} \vf(\vX) \\
\cov(\vx\st, \vx\st') = & k(\vx\st,\vx'\st) - k(\vx\st, \vX) K^{-1} k(\vX, \vx'\st)
\end{align} 
and $K = k(\vX, \vX)$. 
Conveniently, the \gp{} posterior allows us to compute the expectation of \eqref{eqn:integral} in closed form: 
%
\begin{align}
\expectargs{\gp}{Z} & = \expectargs{\gp}{\int f(\vx)p(\vx)d\vx}\\
 & = \int\!\!\! \int\!\! f(\vx) p(f(\vx)|\vf(\vX)) p(\vx) d\vx df\\
 & = \int\!\!\! \mf(\vx) p(\vx) d\vx \\
 & = \left[ \int\!\! k(\vx, \vX) p(\vx) d\vx \right] K^{-1} \vf(\vX) \\
 & = \vz^T K^{-1} \vf(\vX)
\label{eq:marg_mean_symbolic}
\end{align} 
where
\begin{align}
z_n & = \int\!\! k(\vx, \vx_n) p(\vx) d\vx = \expectargs{\vx' \sim p}{k(\vx_n, \vx')}.
\end{align}
Conveniently, as in kernel herding, the desired expectation of $Z_{f,p}$ is simply a linear combination of observed function values $\vf(\vx)$:
\begin{align}
\expectargs{\gp}{Z} & = \vz^T K^{-1} \vf(\vX) \\
    & = \sum_n w_{\bq}^{(n)} \vf(\vx_n)
\end{align}  
where
\begin{align}  
w_{\bq}^{(n)} & = \sum_m \vz_m^T K^{-1}_{mn}
\label{eq:bq_weights}
\end{align}
Thus, we can view the BQ estimate as a weighted version of the herding estimate.  Interestingly, the weights $\vw_{\bq}$ do not need to sum to 1, and are not even necessarily positive.

\subsubsection{Non-normalized and Negative Weights}

\begin{figure}
\centering
\includegraphics[width=\columnwidth]{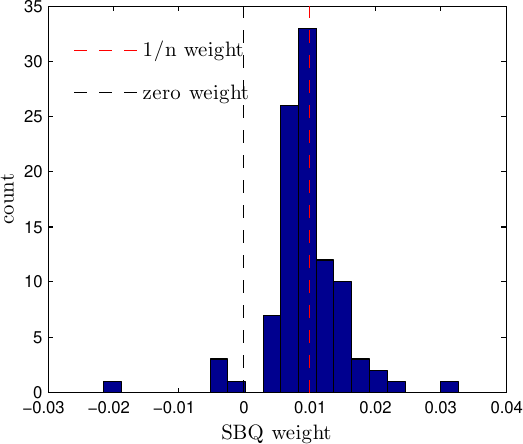}
\caption{A set of optimal weights given by \bq{}, after 100 \sbq{} samples were selected on the distribution shown in Figure \ref{fig:fig1}.  Note that the optimal weights are spread away from the uniform weight ($\frac{1}{N}$), and that some weights are even negative.  The sum of these weights is 0.93.}
\label{fig:weights100}
\end{figure}

When weighting samples, it is often assumed, or enforced \citep[as in][]{bach2012equivalence,Song2008}, that the weights $\vw$ form a probability distribution.  However, there is no technical reason for this requirement, and in fact, the optimal weights do not have this property.  Figure \ref{fig:weights100} shows a representative set of 100 \bq{} weights chosen on samples representing the distribution in figure \ref{fig:fig1}.  There are several negative weights, and the sum of all weights is 0.93.

Figure \ref{fig:weights_shrinkage} demonstrates that, in general, the sum of the Bayesian weights exhibits shrinkage when the number of samples is small.

\begin{figure}
\centering
\includegraphics[width=\columnwidth]{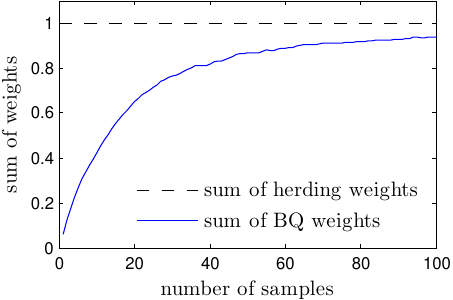}
\caption{An example of Bayesian shrinkage in the sample weights.  In this example, the kernel width is approximately $\nicefrac{1}{20}$ the width of the distribution being considered.  Because the prior over functions is zero mean, in the small sample case the weights are shrunk towards zero.  The weights given by simple Monte Carlo and herding do not exhibit shrinkage. }
\label{fig:weights_shrinkage}
\end{figure}



\subsection{Optimal sampling for BQ}

Bayesian quadrature provides not only a mean estimate of $Z_{f,p}$, but a full Gaussian posterior distribution. The variance of this distribution $\varianceargs{}{Z_{f,p}|f_{x_1}, \dots, f_{x_N}}$ quantifies our uncertainty in the estimate. When selecting locations to evaluate the function $f$, minimising the posterior variance is a sensible strategy. Below, we give a closed form formula for the posterior variance of $Z_{f,p}$, conditioned on the observations $f_{x_1} \dots f_{x_N}$, which we will denote by $\epsilon^2_{\bq{}}$.  For a longer derivation, see \cite{BZMonteCarlo}.
\begin{align}
\epsilon^{2}_{\bq{}}(\vx_1,\ldots,\vx_N) & = 
\varianceargs{}{Z_{f,p}|f_{x_1}, \dots, f_{x_N}} \\
& = \expectargs{\vx, \vx' \sim p}{k(\vx, \vx')} - \vz^T K^{-1} \vz\mbox{,}
\label{eq:marg_var_symbolic}
\end{align}
where $\vz_n = \expectargs{\vx' \sim p}{k(\vx_n, \vx')}$ as before. Perhaps surprisingly, the posterior variance of $Z_{f,p}$ does not depend on the observed function values, only on the location $x_n$ of samples. A similar independence is observed in other optimal experimental design problems involving Gaussian processes \citep{guestrin1}. This allows the optimal samples to be computed ahead of time, before observing any values of $f$ at all \citep{minka2000dqr}.

We can contrast the \bq{} objective $\epsilon^{2}_{\bq{}}$ in \eqref{eq:marg_var_symbolic} to the objective being minimized in herding, $\epsilon^{2}_{herding}$ of equation \eqref{eq:mmd_assumption}. Just like $\epsilon^{2}_{herding}$, $\epsilon^{2}_{\bq{}}$ expresses a trade-off between accuracy and diversity of samples. On the one hand, as samples get close to high density regions under $p$, the values in $\vz$ increase, which results in decreasing variance. On the other hand, as samples get closer to each other, eigenvalues of $K$ increase, resulting in an increase in variance. 

In a similar fashion to herding, we may use a greedy method to minimise $\epsilon^{2}_{\bq{}}$, adding one sample at a time. We will call this algorithm \emph{Sequential Bayesian Quadrature} (\sbq{}):
\begin{align}
\vx_{n+1} &\leftarrow \argmin_{\vx \in \mathcal{X}} \epsilon_{\bq{}}\left(\{\vx_1,\ldots,\vx_{n},\vx\}\right)
\end{align}
Using incremental updates to the Cholesky factor, the criterion can be evaluated in $\mathcal{O}(n^2)$ time. Iteratively selecting $N$ samples thus takes $\mathcal{O}(N^3)$ time, assuming optimisation can be done on $\mathcal{O}(1)$ time.

\section{RELATING $\varianceargs{}{Z_{f,p}}$ TO $\mmd$}

The similarity in the behaviour of $\epsilon^{2}_{herding}$ and $\epsilon^{2}_{\bq{}}$ is not a coincidence, the two quantities are closely related to each other, and to \mmd.
	
\begin{prop} The expected variance in the Bayesian quadrature $\epsilon^{2}_{\bq{}}$  is the maximum mean discrepancy between the target distribution $p$ and $q_{\bq{}}(x) = \sum_{n=1}^{N}w^{(n)}_{\bq{}}\delta_{x_n}(x)$
\end{prop}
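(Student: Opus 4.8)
The plan is to expand the squared MMD directly using the kernel formula in \eqref{eqn:rkhs-mmd}, taking the second argument to be the weighted empirical measure $q_{\bq{}}=\sum_{n}w^{(n)}_{\bq{}}\delta_{x_n}$, and then to substitute the explicit \bq{} weights $\vw_{\bq{}}=K^{-1}\vz$ from \eqref{eq:bq_weights} and simplify. First I would record the mild but necessary observation that the identity $MMD^2_{\He}(p,q)=\Hnorm{\mu_p-\mu_q}^2$ together with its three-term kernel expansion holds for \emph{any} finite signed measure $q$, not only for probability distributions: the mean element $\mu_q=\int\phi(\vx)\,q(d\vx)=\sum_n w^{(n)}_{\bq{}}\phi(\vx_n)$ is well defined, and expanding the squared Hilbert-space norm $\Hnorm{\mu_p-\mu_q}^2=\langle\mu_p,\mu_p\rangle-2\langle\mu_p,\mu_q\rangle+\langle\mu_q,\mu_q\rangle$ is purely algebraic. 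This step matters because, as already noted, the optimal weights $\vw_{\bq{}}$ need neither be positive nor sum to one, so $q_{\bq{}}$ is in general only a signed measure and the usual probabilistic reading of MMD does not apply.

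Next I would evaluate each of the three terms under this substitution. The first term is unchanged and equals $\expectargs{\vx,\vx'\sim p}{k(\vx,\vx')}$. The cross term collapses, using the definition $z_n=\int k(\vx,\vx_n)p(\vx)d\vx$, to $-2\sum_n w^{(n)}_{\bq{}}z_n=-2\,\vw_{\bq{}}^T\vz$, and the self term becomes $\sum_{n,m}w^{(n)}_{\bq{}}w^{(m)}_{\bq{}}k(\vx_n,\vx_m)=\vw_{\bq{}}^T K\,\vw_{\bq{}}$. Now inserting $\vw_{\bq{}}=K^{-1}\vz$, the cross term reads $-2\,\vz^T K^{-1}\vz$ and the self term simplifies as $\vz^T K^{-1}KK^{-1}\vz=\vz^T K^{-1}\vz$. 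The two combine to a single $-\vz^T K^{-1}\vz$, leaving $\expectargs{\vx,\vx'\sim p}{k(\vx,\vx')}-\vz^T K^{-1}\vz$, which is exactly $\epsilon^2_{\bq{}}$ from \eqref{eq:marg_var_symbolic}, completing the argument.

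I do not expect a serious obstacle here; the computation is routine once the signed-measure extension above is granted, and that extension is the only point requiring care. A cleaner and more illuminating route, which I would prefer to present, is to treat $MMD^2_{\He}(p,q)$ as a quadratic form in the weight vector, $g(\vw)=\expectargs{\vx,\vx'\sim p}{k(\vx,\vx')}-2\,\vw^T\vz+\vw^T K\vw$, whose unconstrained minimiser solves $K\vw=\vz$, i.e.\ $\vw=K^{-1}\vz=\vw_{\bq{}}$, with minimum value $\epsilon^2_{\bq{}}$. This framing not only yields the Proposition immediately but also exhibits the \bq{} weights as the MMD-optimal weights, foreshadowing the paper's minimax-optimality claim for sequential Bayesian quadrature.
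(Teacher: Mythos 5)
Your proof is correct, but it takes a genuinely different route from the paper's. You verify the identity algebraically: expand $\mmd^2(p,q_{\bq{}})$ for the signed measure $q_{\bq{}}$ into its three kernel terms, substitute $\vw_{\bq{}}=K^{-1}\vz$, and check that the result coincides with the closed-form posterior variance $\expectargs{\vx,\vx'\sim p}{k(\vx,\vx')}-\vz^T K^{-1}\vz$ of Eqn.\ \eqref{eq:marg_var_symbolic} --- a formula you take as given (the paper states it citing \cite{BZMonteCarlo} rather than deriving it). The paper instead argues abstractly: it writes the variance as the prior expectation $\mathbb{E}_{f\sim GP}\bigl(\int f(x)p(x)dx-\sum_{n}w^{(n)}_{\bq{}}f(x_n)\bigr)^2$ (valid because the posterior variance is independent of the observed function values, so it equals the expected squared error of the posterior-mean estimator), then uses the reproducing property $f(x)=\left\langle f,\phi(x)\right\rangle$, bilinearity, and the Gaussian inner-product fact $\left\langle f,g\right\rangle\sim\mathcal{N}(0,\Hnorm{g})$ to land directly on $\Hnorm{\mu_p-\mu_{q_{\bq{}}}}^2$, never touching the matrix algebra. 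Your explicit attention to the signed-measure extension of the MMD expansion is well placed --- the paper glosses over this point even though its own $q_{\bq{}}$ has negative, non-normalized weights --- and your quadratic-form framing $g(\vw)=\expectargs{\vx,\vx'\sim p}{k(\vx,\vx')}-2\vw^T\vz+\vw^T K\vw$, minimized exactly at $\vw=K^{-1}\vz$, buys something the paper obtains separately: it shows directly that the \bq{} weights are MMD-optimal over all weightings, which the paper instead derives from Bayes-estimator optimality when establishing its minimax forms and the corollary $\epsilon^{2}_{\bq{}}\leq\epsilon^{2}_{KH}$. What the paper's route buys in exchange is independence from the quoted variance formula (its argument in effect reproves it) and the conceptual Bayesian--RKHS bridge that is the point of the equivalence; your route is the more elementary computation conditional on results already stated earlier in the paper.
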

\begin{proof}
The proof involves invoking the representer theorem, using bilinearity of scalar products and the fact that if $f$ is a standard Gaussian process then $\forall g\in\He: \left\langle f,g\right\rangle \sim \mathcal{N}(0,\Hnorm{g}^2)$:
\begin{align}
&\varianceargs{}{Z_{f,p}\vert f_{x_1}, \dots, f_{x_N}}=\\
	&= \mathbb{E}_{f\sim GP} \left( \int f(x) p(x) dx - \sum_{n=1}^{N}w^{(n)}_{\bq{}} f(x_n)\right)^2\\
	&= \mathbb{E}_{f\sim GP} \left( \int \left\langle f, \phi (x)\right\rangle p(x) dx - \sum_{n=1}^{N}w^{(n)}_{\bq{}} \left\langle f, \phi (x_n)\right\rangle\right)^2\\
	&= \mathbb{E}_{f\sim GP} \left\langle f ,  \int\phi(x) p(x) dx - \sum_{n=1}^{N}w^{(n)}_{\bq{}}\phi(x_n)\right\rangle^2\\
	&= \Hnorm{\mu_p - \mu_{q_{\bq{}}}}^2\\
	&= \mmd^2(p,q_{\bq{}})
\end{align}
\end{proof}

We know that the the posterior mean $\expectargs{\gp}{Z_{f,p}\vert f_1,\ldots,f_N}$ is a Bayes estimator and has therefore the minimal expected squared error amongst all estimators. This allows us to further rewrite $\epsilon^{2}_{\bq{}}$ into the following minimax forms:
\begin{align}
\epsilon^{2}_{\bq{}} &= \sup_{\substack{f\in\He\\\Hnorm{f}=1}} \left| \int f_x p(x) dx - \sum_{n=1}^{N}w^{(n)}_{\bq{}} f_{x_n}\right|^2\\
	&= \inf_{\hat{Z}:\mathcal{X}^N\mapsto\mathbb{R}} \sup_{\substack{f\in\He\\\Hnorm{f}=1}} \left| Z - \hat{Z}\left(f_{x_1},\ldots,f_{x_N}\right)\right|^2\\
	&= \inf_{\bm{w}\in\mathbb{R}^N} \sup_{\substack{f\in\He\\\Hnorm{f}=1}} \left| \int f_x p(x) dx - \sum_{n=1}^{N}w_n 	f_{x_n}\right|^2
\end{align}
Looking at $\epsilon^{2}_{\bq{}}$  this way, we may discover the deep similarity to the criterion $\epsilon^2_{herding}$ that kernel herding minimises. Optimal sampling for Bayesian quadrature minimises the same objective as kernel herding, but with the uniform $\frac{1}{N}$ weights replaced by the optimal weights. As a corollary
\begin{align}
\epsilon^{2}_{\bq{}}(x_1,\ldots,x_N)  \leq \epsilon^{2}_{KH} (x_1,\ldots,x_N)
\end{align}

It is interesting that $\epsilon^{2}_{\bq{}}$ has both a Bayesian interpretation as posterior variance under a Gaussian process prior, and a frequentist interpretation as a minimax bound on estimation error with respect to an RKHS.

\section{SUBMODULARITY}

\label{sec:submodularity}

In this section, we use the concept of approximate submodularity \citep{KrauseCevher10}, in order to study convergence properties of \sbq{}.

A set function $s:2^\mathcal{X} \mapsto \mathbb{R}$ is \textit{submodular} if, for all $A\subseteq B\subseteq \mathcal{X}$ and $\forall x \in \mathcal{X}$
\begin{align}
s(A\cup\{x\})-s(A)\geq s(B\cup\{x\})-s(B)
\end{align}
Intuitively, submodularity is a diminishing returns property: adding an element to a smaller set has larger relative effect than adding it to a larger set. A key result \cite[see e.\,g.\ ][and references therein]{KrauseCevher10} is that greedily maximising a submodular function is guaranteed not to differ from the optimal strategy by more than a constant factor of $(1-\frac{1}{e})$.

Herding and \sbq{} are examples of greedy algorithms optimising set functions: they add each pseudosamples in such a way as to minimize the instantaneous reduction in $\mmd$. So it is intuitive to check whether the objective functions these methods minimise are submodular. Unfortunately, neither $\epsilon_{herding}$, nor $\epsilon_{\bq{}}$ satisfies all conditions necessary for submodularity. However, noting that \sbq{} is identical to the sparse dictionary selection problem studied in detail by \citet{KrauseCevher10}, we can conclude that \sbq{} satisfies a weaker condition called \emph{approximate submodularity}. 

A set function $s:2^\mathcal{X} \mapsto \mathbb{R}$ is \textit{approximately submodular} with constant $\epsilon>0$, if for all $A\subseteq B\subseteq \mathcal{X}$ and $\forall x \in \mathcal{X}$
\begin{align}
s(A\cup\{x\})-s(A)\geq s(B\cup\{x\})-s(B) - \epsilon
\end{align}

\begin{prop}\label{prop:submodularity_SBQ}
$\epsilon^{2}_{\bq{}}(\emptyset)-\epsilon^{2}_{\bq{}}(\cdot)$ is a weakly submodular set function with constant $\epsilon<4r$, where $r$ is the incoherency
\begin{equation}
	r = \max_{x,x'\in\mathcal{P}\subseteq\mathcal{X}} \frac{k(x,x')}{\sqrt{k(x,x)k(x',x')}}
\end{equation}
\end{prop}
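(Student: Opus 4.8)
The plan is to recognize the set function as the objective of the sparse dictionary selection problem of \citet{KrauseCevher10} and transfer their coherence-based approximate-submodularity bound. First I would simplify the function itself. From \eqref{eq:marg_var_symbolic}, the leading term $\expectargs{\vx, \vx' \sim p}{k(\vx, \vx')}$ does not depend on the chosen set, so it cancels and $\epsilon^{2}_{\bq{}}(\emptyset)$ equals exactly that constant. Hence, writing $A$ for the selected index set,
\[
F(A) := \epsilon^{2}_{\bq{}}(\emptyset) - \epsilon^{2}_{\bq{}}(A) = \vz_A^\top K_A^{-1} \vz_A,
\]
where $\vz_A$ and $K_A$ are the restrictions of $\vz$ and $K$ to $A$, and $F(\emptyset)=0$.

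Second, I would reinterpret $F$ geometrically in the RKHS $\He$. Using $z_n = \langle \mu_p, \phi(\vx_n)\rangle$ and $(K_A)_{nm} = \langle \phi(\vx_n), \phi(\vx_m)\rangle$, the quadratic form $\vz_A^\top K_A^{-1}\vz_A$ is precisely $\Hnorm{P_A \mu_p}^2$, the squared norm of the orthogonal projection of the mean element $\mu_p$ onto the subspace $V_A = \mathrm{span}\{\phi(\vx_n): n\in A\}$. Thus $F(A)$ is the explained variance obtained when reconstructing the target signal $\mu_p$ from the atoms $\{\phi(\vx): \vx \in \mathcal{P}\}$, which is exactly the objective maximized in the sparse dictionary selection problem. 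This identification is what lets us borrow the Krause--Cevher machinery wholesale.

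Third, I would establish the approximate diminishing-returns inequality. The marginal gain of an atom has the standard matching-pursuit closed form
\[
F(A\cup\{x\}) - F(A) = \frac{\langle \mu_p - P_A\mu_p,\, \phi(x)\rangle^2}{\Hnorm{\phi(x) - P_A\phi(x)}^2}.
\]
When the atoms are mutually orthogonal this gain is independent of $A$, so $F$ is modular and hence submodular; for a general dictionary the failure of $F(A\cup\{x\})-F(A)\ge F(B\cup\{x\})-F(B)$ is driven entirely by the off-diagonal Gram entries, i.e.\ the normalized inner products $k(x,x')/\sqrt{k(x,x)k(x',x')}$. Bounding these by the incoherence $r$ and propagating the perturbation through the projection formula yields a defect of at most $4r$, matching the bound of \citet{KrauseCevher10} and giving approximate submodularity with $\epsilon < 4r$.

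The hard part is this last step: carefully bounding the submodularity defect through the projection and marginal-gain formulas, and in particular reconciling the normalization convention. The coherence bound in \citet{KrauseCevher10} is stated for unit-norm dictionary atoms, whereas here $\phi(\vx)$ has norm $\sqrt{k(\vx,\vx)}$; since the incoherence $r$ is defined with the normalized kernel, the two conventions agree, but one must restrict attention to a finite candidate pool $\mathcal{P}\subseteq\mathcal{X}$ (as in the definition of $r$) for the finite-dictionary result to apply. The bulk of the remaining work is therefore the bookkeeping that translates our RKHS projection objective into their dictionary-selection notation so that their constant transfers verbatim, rather than any fresh combinatorial argument.
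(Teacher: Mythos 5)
Your proposal is correct and follows essentially the same route as the paper: both identify $\epsilon^{2}_{\mathrm{BQ}}(\emptyset)-\epsilon^{2}_{\mathrm{BQ}}(A)$ with the squared norm of the projection of the mean element $\mu_p$ onto the span of the selected atoms (equivalently, the explained-variance objective of sparse dictionary selection), and then transfer the coherence-based approximate-submodularity constant from Theorem 1 of Krause and Cevher rather than proving the $4r$ defect bound from scratch. Your extra bookkeeping (the explicit quadratic form $\vz_A^{\top}K_A^{-1}\vz_A$, the matching-pursuit marginal-gain formula, and the unit-norm normalization caveat) is a sound elaboration of what the paper compresses into a one-line citation.
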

\begin{proof} By the definition of $\mmd$ we can see that
$-\epsilon^{2}_{\bq{}} = -\inf_{w\in\mathbb{R}^N}\Hnorm{\mu_p - \sum_{n=1}^N w^{(n)}_{\bq{}}k(\cdot,\vx_n)}^2$ is the negative squared distance between the mean element $\mu_p$ and its projection onto the subspace spanned by the elements $k(\cdot,\vx_n)$. Substituting $k=1$ into Theorem 1 of \citet{KrauseCevher10} concludes the proof.
\end{proof}

Unfortunately, weak submodularity does not provide the strong near-optimality guarantees as submodularity does . If $s:2^\mathcal{X} \mapsto \mathbb{R}$ is a weakly submodular function with constant $\epsilon$, and $\vert\mathcal{A}_n\vert=n$ is the result of greedy optimisation of $s$, then
\begin{equation}
	s(\mathcal{A}_n) \geq \left(1-\frac{1}{e}\right)\max_{\vert\mathcal{A}\vert\leq n}s(\mathcal{A}) - n\epsilon
\end{equation}

As pointed out by \citet{KrauseCevher10}, this guarantee is very weak, as in our case the objective function $\epsilon^{2}_{\bq{}}(\emptyset)-\epsilon^{2}_{\bq{}}(\cdot)$ is upper bounded by a constant. However, establishing a connection between \sbq{} and sparse dictionary selection problem opens up interesting directions for future research, and it may be possible to apply algorithms and theory developed for sparse dictionary selection to kernel-based quasi-Monte Carlo methods.

\section{EXPERIMENTS}
\label{sec:experiments}

In this section, we examine empirically the rates of convergence of sequential Bayesian quadrature and herding.  We examine both the expected error rates, and the empirical error rates.

In all experiments, the target distribution $p$ is chosen a 2D mixture of 20 Gaussians, whose equiprobability contours are shown in Figure \ref{fig:fig1}. To ensure a comparison fair to herding, the target distribution, and the kernel used by both methods, correspond exactly to the one used in \citep[Fig. 1]{chen2010super}. 
For experimental simplicity, each of the sequential sampling algorithms minimizes the next sample location from a pool of 10000 locations randomly drawn from the base distribution. In practice, one would run a local optimizer from each of these candidate locations, however in our experiments we found that this did not make a significant difference in the sample locations chosen. 

\subsection{Matching a distribution}

We first extend an experiment from \citep{chen2010super} designed to illustrate the mode-seeking behavior of herding in comparison to random samples. In that experiment, 
it is shown that a small number of i.\,i.\,d.\ samples drawn from a multimodal distribution will tend to, by chance, assign too many samples to some modes, and too few to some other modes. In contrast, herding places `super-samples' in such a way as to avoid regions already well-represented, and seeks modes that are under-represented.

We demonstrate that although herding improves upon i.\,i.\,d.\ sampling, the uniform weighting of super-samples leads to sub-optimal performance.  Figure \ref{fig:fig1} shows the first 20 samples chosen by kernel herding, in comparison with the first 8 samples chosen by \sbq{}.  By weighting the 8 \sbq{} samples by the quadrature weights in \eqref{eq:bq_weights}, we can obtain the same expected loss as by using the 20 uniformly-weighted herding samples.  
\begin{figure}
\includegraphics[width=\columnwidth]{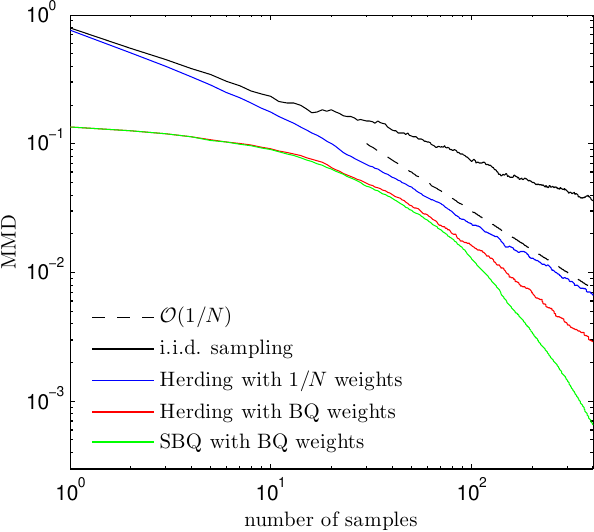}
\caption{The maximum mean discrepancy, or expected error of several different quadrature methods.  Herding appears to approach a rate close to $\mathcal{O}(1/N)$.  \sbq{} appears to attain a faster, but unknown rate.}
\label{fig:mmd_curve}
\end{figure}
Figure \ref{fig:mmd_curve} shows MMD versus the number of samples added, on the distribution shown in Figure \ref{fig:fig1}.  We can see that in all cases, \sbq{} dominates herding.  It appears that \sbq{} converges at a faster rate than $\mathcal{O}(1/N)$, although the form of this rate is unknown.

There are two differences between herding and \sbq{}:  \sbq{} chooses samples according to a different criterion, and also weights those samples differently.  We may ask whether the sample locations or the weights are contributing more to the faster convergence of \sbq{}. Indeed, in Figure \ref{fig:fig1} we observe that the samples selected by \sbq{} are quite similar to the samples selected by kernel herding. To answer this question, we also plot in Figure \ref{fig:mmd_curve} the performance of a fourth method, which selects samples using herding, but later re-weights the herding samples with \bq{} weights.  Initially, this method attains similar performance to \sbq{}, but as the number of samples increases, \sbq{} attains a better rate of convergence.  This result indicates that the different sample locations chosen by \sbq{}, and not only the optimal weights, are responsible for the increased convergence rate of \sbq{}.

\subsection{Estimating Integrals}


We then examined the empirical performance of the different estimators at estimating integrals of real functions.  To begin with, we looked at performance on 100 randomly drawn functions, of the form:
\begin{align}
f(\vx) & = \sum_{i=1}^{10} \alpha_i k(\vx, \vc_i)
\end{align}
where
\begin{align}
\Hnorm{f}^2 = \sum_{i=1}^{10} \sum_{j=1}^{10} \alpha_i \alpha_j k(\vc_i, \vc_j) = 1
\end{align}
That is, these functions belonged exactly to the unit ball of the RKHS defined by the kernel $k(\vx, \vx')$ used to model them.
\begin{figure}
\includegraphics[width=\columnwidth]{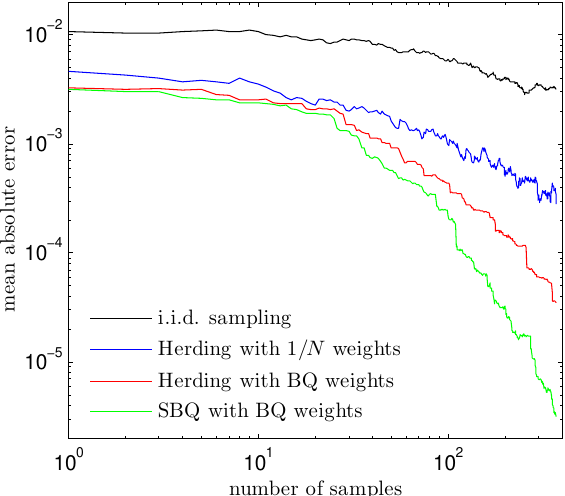}
\caption{Within-model error: The empirical error rate in estimating $Z_{f,p}$, for several different sampling methods, averaged over 250 functions randomly drawn from the RKHS corresponding to the kernel used.}
\label{fig:error_curve}
\end{figure}
Figure \ref{fig:error_curve} shows the empirical error versus the number of samples, on the distribution shown in Figure \ref{fig:fig1}.  The empirical rates attained by the method appear to be similar to the MMD rates in Figure \ref{fig:mmd_curve}.

By definition, MMD provides a upper bound on the estimation error in the integral of any function in the unit ball of the RKHS (Eqn.\ \eqref{eqn:rkhs-mmd}), including the Bayesian estimator, \sbq{}. Figure \ref{fig:bound_curve} demonstrates this quickly decreasing bound on the \sbq{} empirical error.

\begin{figure}
\includegraphics[width=\columnwidth]{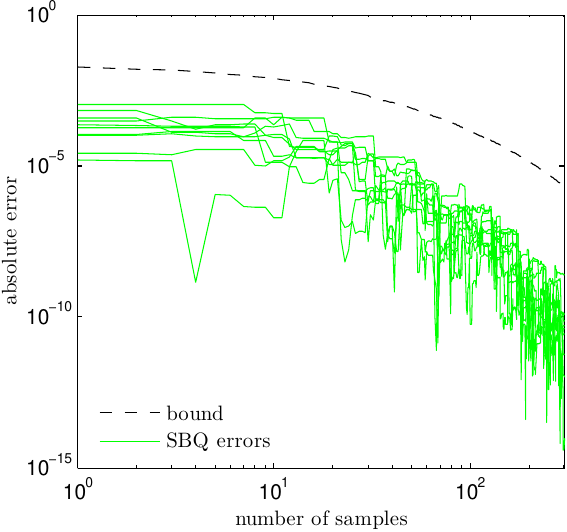}
\caption{The empirical error rate in estimating $Z_{f,p}$,  for the \sbq{} estimator, on 10 random functions drawn from the RKHS corresponding to the kernel used.  Also shown is the upper bound on the error rate implied by the $\mmd$.}
\label{fig:bound_curve}
\end{figure}

\subsection{Out-of-model performance}

A central assumption underlying \sbq{} is that the integrand function belongs to the RKHS specified by the kernel.  To see how performance is effected if this assumption is violated, we performed empirical tests with functions chosen from outside the RKHS.  We drew 100 functions of the form:
\begin{align}
f(\vx) & = \sum_{i=1}^{10} \alpha_i \exp(-\frac{1}{2} (\vx -\vc_i)^T \Sigma_i^{-1} (\vx -\vc_i)
\end{align}
where each $\alpha_i$ $\vc_i$ $\Sigma_i$ were drawn from broad distributions.  This ensured that the drawn functions had features such as narrow bumps and ridges which would not be well modelled by functions belonging to the isotropic kernel defined by $k$.
\begin{figure}
\includegraphics[width=\columnwidth]{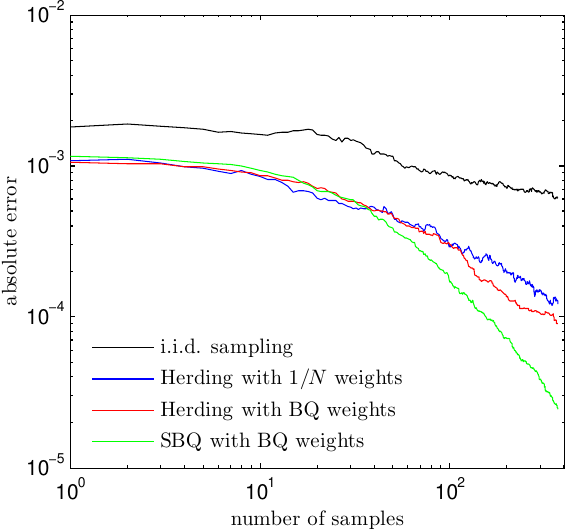}
\caption{Out-of-model error: The empirical error rates in estimating $Z_{f,p}$, for several different sampling methods, averaged over 250 functions drawn from outside the RKHS corresponding to the kernels used.}
\label{fig:error_curve_outmodel}
\end{figure}
Figure \ref{fig:error_curve_outmodel} shows that, on functions drawn from outside the assumed RKHS, relative performance of all methods remains similar.

Code to reproduce all results is available at \texttt{github.com/duvenaud/herding-paper}

\section{DISCUSSION}

\subsection{Choice of Kernel}

Using herding techniques, we are able to achieve fast convergence on a Hilbert space of \emph{well-behaved} functions, but this fast convergence is at the expense of the estimate not necessarily converging for functions outside this space.
If we use a characteristic kernel \citep{Sriperumbudur2010}, such as the exponentiated-quadratic or Laplacian kernels, then convergence in MMD implies weak convergence of $q_N$ to the target distribution. 
This means that the estimate converges for any bounded measurable function $f$. The speed of convergence, however, may not be as fast.

Therefore it is crucial that the kernel we choose is representative of the function or functions $f$ we will integrate.  For example, in our experiments, the convergence of herding was sensitive to the width of the Gaussian kernel.  One of the major weaknesses of kernel methods in general is the difficulty of setting kernel parameters.  A key benefit of the Bayesian interpretation of herding and MMD presented in this paper is that it provides a recipe for adapting the Hilbert space to the observations $f(x_n)$.  To be precise, we can fit the kernel parameters by maximizing the marginal likelihood of Gaussian process conditioned on the observations.  Details can be found in \citep{rasmussen38gaussian}.

\subsection{Computational Complexity}

While we have shown that Bayesian Quadrature provides the optimal re-weighting of samples, computing the optimal weights comes at an increased computational cost relative to herding. 
The computational complexity of computing Bayesian quadrature weights for $N$ samples is $\mathcal{O}(N^3)$, due to the necessity of inverting the Gram matrix $K(\vx, \vx)$.  Using the Woodbury identity, the cost of adding a new sample to an existing set is $\mathcal{O}(N^2)$.  For herding, the computational complexity of evaluating a new sample is only $\mathcal{O}(N)$, making the cost of choosing $N$ herding samples $\mathcal{O}(N^2)$.  For Monte Carlo sampling, the cost of adding an i.i.d. sample from the target distribution is only $\mathcal{O}(1)$.

\begin{table*}[t]
\begin{center}
\begin{tabular}{c|ccc}
method & complexity & rate & guarantee\\
\midrule
MCMC & $\mathcal{O}(N)$ & variable & ergodic theorem\\
i.i.d. MC & $\mathcal{O}(N)$ & $\frac{1}{\sqrt{N}}$ & law of large numbers\\
herding & $\mathcal{O}(N^2)$ & $\frac{1}{\sqrt{N}} \geq \cdot \geq \frac{1}{N}$ & \citep{chen2010super,bach2012equivalence} \\
SBQ & $\mathcal{O}(N^3)$ & unknown & approximate submodularity\\
\end{tabular}
\end{center}
\caption{A comparison of the rates of convergence and computational complexity of several integration methods.}
\label{tbl:rates}
\end{table*}

The relative computational cost of computing samples and weights using \bq{}, herding, and sampling must be weighed against the cost of evaluating $f$ at the sample locations.  Depending on this trade-off, the three sampling methods form a Pareto frontier over computational speed and estimator accuracy.  When computing $f$ is cheap, we may wish to use Monte Carlo methods.  In cases where $f$ is computationally costly, we would expect to choose the \sbq{} method.  When $f$ is relatively expensive, but a very large number of samples are required, we may choose to use kernel herding instead.  However, because the rate of convergence of \sbq{} is faster, there may be situations in which the $\mathcal{O}(N^3)$ cost is relatively inexpensive, due to the smaller $N$ required by \sbq{} to achieve the same accuracy as compared to using other methods.  

There also exists the possibility to switch to a less costly sampling algorithm as the number of samples increases.
Table \ref{tbl:rates} summarizes the rates of convergence of all the methods considered here.

\section{CONCLUSIONS}

In this paper, we have shown three main results:  First, we proved that the loss minimized by kernel herding is closely related to the loss minimized by Bayesian quadrature, when selecting sample locations. This implies that sequential Bayesian quadrature can viewed as an optimally-weighted version of kernel herding.

Second, we showed that the loss minimized by the Bayesian method is approximately submodular with respect to the samples chosen, and established connections to the submodular dictionary selection problem studied in \citep{KrauseCevher10}.

Finally, we empirically demonstrated a superior rate of convergence of \sbq{} over herding, and demonstrated a bound on the empirical error of the Bayesian quadrature estimate.

\subsection{Future Work}

In section \ref{sec:submodularity}, we showed that \sbq{} is approximately submodular, which provides only weak sub-optimality guarantees of its performance. It would be of interest to further explore the connection between Bayesian Quadrature and the dictionary selection problem to see if algorithms developed for dictionary selection can provide further practical or theoretical developments. The results in section \ref{sec:experiments}, specifically Figure \ref{fig:mmd_curve}, suggest that the convergence rate of \sbq{} is faster than $\mathcal{O}(1/N)$. However, we are not aware of any work showing what the theoretically optimal rate is. It would be of great interest to determine this optimal rate of convergence for particular classes of kernels.

\subsection*{Acknowledgements}

The authors would like to thank Carl Rasmussen and Francis Bach for helpful discussions, and Yutian Chen for his help in reproducing experiments.
We also thank Simon Lacoste-Julien for many helpful comments.

\bibliographystyle{icml2012}
\bibliography{herding}

\end{document}